\documentclass[conference]{IEEEtran}
\IEEEoverridecommandlockouts
\usepackage[T1]{fontenc}
\usepackage{newtxtext}
\usepackage{newtxmath}
\usepackage{cite}
\usepackage{amsmath}
\usepackage{graphicx}
\usepackage{booktabs}
\usepackage{multirow}
\usepackage{xcolor}
\usepackage{tikz}
\usepackage{pgfplots}
\pgfplotsset{compat=1.18}
\usetikzlibrary{arrows.meta,positioning,calc,fit,backgrounds}
\usepackage{url}
\usepackage{textcomp}

\definecolor{slotblue}{HTML}{2A78D6}
\definecolor{slotorange}{HTML}{EB6834}
\definecolor{slotaqua}{HTML}{1BAF7A}
\definecolor{inkmuted}{HTML}{52514E}
\definecolor{gridgray}{HTML}{E4E3DF}

\newtheorem{proposition}{Proposition}
\newcommand{\method}{IMPULSE}
\definecolor{rbg}{HTML}{F0F0F0}\definecolor{rblue}{HTML}{2E67B4}\definecolor{rred}{HTML}{C8372D}
\newsavebox{\tryone}\newsavebox{\trytwo}\newsavebox{\trythree}\newsavebox{\motionbox}
\newcommand{\pct}{\%}

\begin{document}
\bstctlcite{IEEEexample:BSTcontrol}

\title{\method: Push First, Then Predict\\in V-JEPA~2 Feature Space}

\author{\IEEEauthorblockN{Tingan Jin}
\IEEEauthorblockA{University of California, Los Angeles\\
Los Angeles, CA, USA\\
annjin@ucla.edu}}

\maketitle

\begin{abstract}
Latent world models such as V-JEPA~2 forecast in the feature space of a video encoder, but one look at an object does not reveal how hard it is to rotate, how much it bounces or how much it slides. People find out by trying: a few pokes reveal how an object responds. \method{}, named after the impulse response it infers, gives a predictor in frozen V-JEPA~2 feature space the same ability. A Bayesian contact memory summarizes three short calibration pushes, a predicted object pose carries it into the predictor, and a lightweight random-Fourier pose head lets the predictor use the pose. On 512 fresh objects, in a pre-specified held-out confirmation with models sealed before the data existed, the memory lowers the 1-second feature-forecast error by 11.6--12.8\pct{} relative to the same predictor without experience, for all 8 training seeds, with a 95\pct{} seed-and-object interval excluding zero. The gain also holds for 128 objects with out-of-range physics (11.7--14.0\pct{}), and the full extension beats a plain predictor by 15.5--16.6\pct{}. An information-ladder argument, exact for ideal predictors, shows that physics reaches the features only through motion, so we measure how much the pathway lets pose help. On fresh objects, the Fourier head roughly triples the gain of a perfect memory over plain pose tokens (2.8--3.2$\times$). Raw push measurements fed through the same pathway give a similar gain, so the Bayesian summary is not required; it condenses the trials into eight interpretable numbers. The gain is absent at contact onset and grows afterwards.
\end{abstract}

\begin{IEEEkeywords}
world models, V-JEPA 2, intuitive physics, interactive perception, system identification
\end{IEEEkeywords}

\section{Introduction}

World models predict what will happen next and are a basic building block for planning agents~\cite{ha2018recurrent,hafner2025dreamerv3,lecun2022path}. Joint-embedding predictive architectures (JEPAs) learn representations by predicting the embeddings of masked or future content instead of reconstructing pixels~\cite{assran2023ijepa,bardes2024vjepa}. V-JEPA~2 couples a large self-supervised video encoder with an action-conditioned predictor trained on its frozen features and uses it for robot planning~\cite{assran2025vjepa2}, and its predecessor V-JEPA captures surprisingly much intuitive physics~\cite{garrido2025intuitive}.

\begin{figure}[t]
\centering
\definecolor{rbg}{HTML}{F0F0F0}\definecolor{rblue}{HTML}{2E67B4}\definecolor{rred}{HTML}{C8372D}\begin{tikzpicture}[font=\scriptsize]
\fill[rbg] (0.000,-2.620) rectangle (2.620,0.000);
\draw[inkmuted!60, line width=0.3pt] (0.000,-2.620) rectangle (2.620,0.000);
\path[fill=rblue, fill opacity=0.16] (1.820,-1.620) -- (1.120,-1.981) -- (1.030,-1.806) -- (1.730,-1.445) -- cycle;
\path[fill=rblue, fill opacity=0.16] (1.467,-1.580) -- (1.292,-1.671) -- (1.022,-1.145) -- (1.197,-1.055) -- cycle;
\path[fill=rblue, fill opacity=0.16] (1.694,-1.272) -- (1.183,-1.872) -- (1.033,-1.744) -- (1.545,-1.144) -- cycle;
\path[fill=rblue, fill opacity=0.16] (1.353,-1.369) -- (1.225,-1.519) -- (0.775,-1.135) -- (0.903,-0.986) -- cycle;
\path[fill=rblue, fill opacity=0.16] (1.497,-0.940) -- (1.251,-1.689) -- (1.064,-1.627) -- (1.309,-0.878) -- cycle;
\path[fill=rblue, fill opacity=0.16] (1.217,-1.159) -- (1.156,-1.346) -- (0.594,-1.162) -- (0.656,-0.975) -- cycle;
\path[fill=rblue!35!white] (1.851,-1.761) -- (1.102,-2.006) -- (1.040,-1.819) -- (1.789,-1.574) -- cycle;
\path[fill=rblue!35!white] (1.509,-1.666) -- (1.321,-1.727) -- (1.137,-1.165) -- (1.325,-1.104) -- cycle;
\path[fill=rblue] (1.239,-0.650) -- (1.296,-1.436) -- (1.099,-1.450) -- (1.042,-0.664) -- cycle;
\path[fill=rblue] (1.064,-0.959) -- (1.078,-1.155) -- (0.488,-1.198) -- (0.474,-1.002) -- cycle;
\fill[rred] (1.752,-2.049) circle (0.099);
\draw[-{Stealth[length=1.3mm]}, rred, line width=0.7pt] (1.715,-1.936) -- (1.633,-1.687);
\node[anchor=north, align=center, text=inkmuted] at (1.310,-2.660) {turns easily, rough\\$\rho{=}1200$, $e{=}0.10$, $\mu{=}0.35$};
\fill[rbg] (2.800,-2.620) rectangle (5.420,0.000);
\draw[inkmuted!60, line width=0.3pt] (2.800,-2.620) rectangle (5.420,0.000);
\path[fill=rblue, fill opacity=0.16] (4.612,-1.645) -- (3.879,-1.935) -- (3.807,-1.751) -- (4.540,-1.462) -- cycle;
\path[fill=rblue, fill opacity=0.16] (4.265,-1.571) -- (4.082,-1.643) -- (3.865,-1.093) -- (4.048,-1.021) -- cycle;
\path[fill=rblue, fill opacity=0.16] (4.511,-1.361) -- (3.827,-1.754) -- (3.729,-1.583) -- (4.413,-1.190) -- cycle;
\path[fill=rblue, fill opacity=0.16] (4.156,-1.338) -- (3.985,-1.436) -- (3.691,-0.923) -- (3.862,-0.825) -- cycle;
\path[fill=rblue, fill opacity=0.16] (4.399,-1.076) -- (3.780,-1.563) -- (3.658,-1.409) -- (4.277,-0.921) -- cycle;
\path[fill=rblue, fill opacity=0.16] (4.045,-1.104) -- (3.890,-1.226) -- (3.525,-0.761) -- (3.679,-0.640) -- cycle;
\path[fill=rblue!35!white] (4.651,-1.761) -- (3.902,-2.006) -- (3.840,-1.819) -- (4.589,-1.574) -- cycle;
\path[fill=rblue!35!white] (4.309,-1.666) -- (4.121,-1.727) -- (3.937,-1.165) -- (4.125,-1.104) -- cycle;
\path[fill=rblue] (4.277,-0.791) -- (3.735,-1.363) -- (3.592,-1.227) -- (4.134,-0.655) -- cycle;
\path[fill=rblue] (3.931,-0.870) -- (3.795,-1.013) -- (3.366,-0.606) -- (3.502,-0.463) -- cycle;
\fill[rred] (4.552,-2.049) circle (0.099);
\draw[-{Stealth[length=1.3mm]}, rred, line width=0.7pt] (4.515,-1.936) -- (4.433,-1.687);
\node[anchor=north, align=center, text=inkmuted] at (4.110,-2.660) {resists turning, rough\\$\rho{=}6000$, $e{=}0.10$, $\mu{=}0.35$};
\fill[rbg] (5.600,-2.620) rectangle (8.220,0.000);
\draw[inkmuted!60, line width=0.3pt] (5.600,-2.620) rectangle (8.220,0.000);
\path[fill=rblue, fill opacity=0.16] (7.433,-1.534) -- (6.753,-1.932) -- (6.653,-1.762) -- (7.333,-1.364) -- cycle;
\path[fill=rblue, fill opacity=0.16] (7.078,-1.513) -- (6.908,-1.613) -- (6.610,-1.103) -- (6.780,-1.003) -- cycle;
\path[fill=rblue, fill opacity=0.16] (7.297,-0.980) -- (6.904,-1.664) -- (6.734,-1.565) -- (7.126,-0.882) -- cycle;
\path[fill=rblue, fill opacity=0.16] (6.979,-1.138) -- (6.881,-1.309) -- (6.368,-1.015) -- (6.466,-0.844) -- cycle;
\path[fill=rblue, fill opacity=0.16] (7.041,-0.477) -- (7.040,-1.265) -- (6.843,-1.265) -- (6.844,-0.477) -- cycle;
\path[fill=rblue, fill opacity=0.16] (6.843,-0.773) -- (6.843,-0.970) -- (6.252,-0.969) -- (6.252,-0.772) -- cycle;
\path[fill=rblue!35!white] (7.451,-1.761) -- (6.702,-2.006) -- (6.640,-1.819) -- (7.389,-1.574) -- cycle;
\path[fill=rblue!35!white] (7.109,-1.666) -- (6.921,-1.727) -- (6.737,-1.165) -- (6.925,-1.104) -- cycle;
\path[fill=rblue] (6.706,-0.079) -- (7.098,-0.763) -- (6.927,-0.861) -- (6.535,-0.177) -- cycle;
\path[fill=rblue] (6.682,-0.433) -- (6.780,-0.604) -- (6.267,-0.898) -- (6.169,-0.727) -- cycle;
\fill[rred] (7.352,-2.049) circle (0.099);
\draw[-{Stealth[length=1.3mm]}, rred, line width=0.7pt] (7.315,-1.936) -- (7.233,-1.687);
\node[anchor=north, align=center, text=inkmuted] at (6.910,-2.660) {bouncy, slippery\\$\rho{=}3000$, $e{=}0.75$, $\mu{=}0.00$};
\end{tikzpicture}
\caption{Same look, same push, different futures. Three T-blocks that look identical receive the same push (red disk: the pusher) but differ in hidden inertia ratio $\rho$, restitution $e$ and friction $\mu$. Faint copies show the block every 0.25\,s; the solid block is its pose after 1\,s. A single frame cannot tell the three apart, but a few test pushes can. (Illustrative rollouts of the task simulator.)}
\label{fig:teaser}
\end{figure}

Some physical properties, however, cannot be seen. Two T-shaped blocks that look identical may differ in how their mass is distributed, how elastic they are and how much friction they meet. Pushed the same way, they move differently (Fig.~\ref{fig:teaser}), and no amount of staring at a single frame tells them apart. People solve physical problems by rapid trial and error~\cite{allen2020rapid}, and robots resolve such ambiguity by interacting: a few exploratory pokes reveal how an object responds~\cite{bohg2017interactive,zhou2019epi,xu2019densephysnet}. Physical parameters inferred from video or interaction improve physical predictions~\cite{wu2015galileo,li2025pinwm} and let control policies adapt to new dynamics~\cite{yu2017uposi,kumar2021rma}, and an embedding of a few earlier push videos of a novel object improves pixel-space video prediction~\cite{lin2020evf}. Recent work asks which physical quantities latent world models encode~\cite{tan2026latent}, supplies true physical parameters to a JEPA predictor~\cite{liu2026sgjepa}, or adapts a latent world model at test time~\cite{wang2026adajepa}. To our knowledge, how much a few active trials before the query improve a predictor whose targets are frozen, pretrained video features, and what limits that gain, has not been measured.

We study this question in a controlled pushing task with hidden inertia, restitution and friction. Our method, \method{}, pushes first and predicts second (Fig.~\ref{fig:schematic}). Before a forecast, the agent performs three short calibration pushes. A Bayesian contact memory summarizes in eight numbers how the object responds to an impulse. We compare a predictor in frozen V-JEPA~2 feature space with and without this memory, with the same architecture, data and training, and evaluate on fresh objects under a protocol fixed in advance.

With a standard pose-token pathway the gain is small: on fresh objects, even a perfect memory, holding the true physical parameters, improves the forecast by only 6.9--7.8\pct{}. The bottleneck is not the memory. Physics changes the future features only by changing where the object is, so for an ideal predictor physical knowledge can help a feature forecast at most as much as \emph{knowing the future pose}. We make this precise as an information ladder (Section~\ref{sec:ladder}) and measure every rung through the same pathway. A pose-only regressor shows that the features are highly pose-predictable, yet the pose-token predictor uses little of it. A lightweight random-Fourier pose head raises the gain of a perfect memory 2.8--3.2$\times$ and that of the real memory 2.4--4.1$\times$. Finally, a same-information control feeds the raw push measurements through the same pathway and finds a similar gain, with no clear difference from the memory. What matters is that the predictor gets to try, and that the pathway lets the result reach the features.

Our contributions are:
\begin{itemize}
\item \textbf{A push-first, predict-second world model.} \method{} extends a predictor in frozen V-JEPA~2 feature space with a contact-experience memory and a pose pathway. In a pre-specified held-out confirmation on 512 fresh objects, they cut 1-s feature-forecast error by 11.6--12.8\pct{} for all 8 training seeds. The gain holds for out-of-range physics and for frictionless objects, and the full extension beats a plain predictor by 15.5--16.6\pct{}. A same-information control shows that the gain does not require the Bayesian form of the summary.
\item \textbf{An information ladder.} A simple proposition orders the errors of predictors that know nothing, a memory, the true physical parameters, or the true future pose. The ladder turns ``does the memory help?'' into two measurable questions: how much the pathway lets pose help, and what share of that the memory captures.
\item \textbf{The pathway is the lever.} On fresh objects, a random-Fourier pose head multiplies the absolute gain of a perfect memory by 2.8--3.2, with 95\pct{} bootstrap intervals that stay above 2.3, and that of the real memory by 2.4--4.1. The improvement is absent at contact onset and grows afterwards.
\end{itemize}
For semantic computing, the study shows how a compact, interpretable description of an object's physical response can be injected into a learned semantic feature space, and how to budget what such knowledge can buy.

\begin{figure*}[t]
\centering
\sbox{\tryone}{\definecolor{rbg}{HTML}{F0F0F0}\definecolor{rblue}{HTML}{2E67B4}\definecolor{rred}{HTML}{C8372D}\begin{tikzpicture}
\fill[rbg] (0.000,-1.050) rectangle (1.050,0.000);
\draw[inkmuted!60, line width=0.3pt] (0.000,-1.050) rectangle (1.050,0.000);
\path[fill=rblue!35!white] (0.884,-0.306) -- (0.685,-0.848) -- (0.549,-0.798) -- (0.749,-0.257) -- cycle;
\path[fill=rblue!35!white] (0.674,-0.460) -- (0.624,-0.595) -- (0.218,-0.445) -- (0.268,-0.310) -- cycle;
\path[fill=rblue] (0.681,-0.234) -- (0.554,-0.797) -- (0.413,-0.765) -- (0.540,-0.202) -- cycle;
\path[fill=rblue] (0.492,-0.413) -- (0.461,-0.554) -- (0.038,-0.459) -- (0.070,-0.318) -- cycle;
\fill[rred] (0.915,-0.482) circle (0.072);
\end{tikzpicture}}\sbox{\trytwo}{\definecolor{rbg}{HTML}{F0F0F0}\definecolor{rblue}{HTML}{2E67B4}\definecolor{rred}{HTML}{C8372D}\begin{tikzpicture}
\fill[rbg] (0.000,-1.050) rectangle (1.050,0.000);
\draw[inkmuted!60, line width=0.3pt] (0.000,-1.050) rectangle (1.050,0.000);
\path[fill=rblue!35!white] (1.001,-0.295) -- (0.683,-0.933) -- (0.524,-0.854) -- (0.841,-0.216) -- cycle;
\path[fill=rblue!35!white] (0.722,-0.455) -- (0.643,-0.615) -- (0.165,-0.377) -- (0.244,-0.217) -- cycle;
\path[fill=rblue] (0.888,-0.244) -- (0.474,-0.824) -- (0.329,-0.720) -- (0.743,-0.140) -- cycle;
\path[fill=rblue] (0.588,-0.358) -- (0.484,-0.503) -- (0.049,-0.192) -- (0.153,-0.047) -- cycle;
\fill[rred] (0.832,-0.884) circle (0.089);
\end{tikzpicture}}\sbox{\trythree}{\definecolor{rbg}{HTML}{F0F0F0}\definecolor{rblue}{HTML}{2E67B4}\definecolor{rred}{HTML}{C8372D}\begin{tikzpicture}
\fill[rbg] (0.000,-1.050) rectangle (1.050,0.000);
\draw[inkmuted!60, line width=0.3pt] (0.000,-1.050) rectangle (1.050,0.000);
\path[fill=rblue!35!white] (0.943,-0.784) -- (0.398,-0.997) -- (0.345,-0.861) -- (0.889,-0.648) -- cycle;
\path[fill=rblue!35!white] (0.685,-0.727) -- (0.549,-0.781) -- (0.389,-0.372) -- (0.526,-0.319) -- cycle;
\path[fill=rblue] (0.756,-0.349) -- (0.305,-0.722) -- (0.212,-0.610) -- (0.662,-0.237) -- cycle;
\path[fill=rblue] (0.494,-0.377) -- (0.381,-0.470) -- (0.101,-0.132) -- (0.214,-0.039) -- cycle;
\fill[rred] (0.851,-0.914) circle (0.073);
\end{tikzpicture}}\sbox{\motionbox}{\definecolor{rbg}{HTML}{F0F0F0}\definecolor{rblue}{HTML}{2E67B4}\definecolor{rred}{HTML}{C8372D}\begin{tikzpicture}
\fill[rbg] (0.000,-1.250) rectangle (1.250,0.000);
\draw[inkmuted!60, line width=0.3pt] (0.000,-1.250) rectangle (1.250,0.000);
\path[fill=rblue!35!white] (0.871,-0.199) -- (1.210,-0.689) -- (1.088,-0.774) -- (0.748,-0.284) -- cycle;
\path[fill=rblue!35!white] (0.876,-0.467) -- (0.961,-0.590) -- (0.593,-0.844) -- (0.508,-0.722) -- cycle;
\draw[slotblue, line width=0.6pt, densely dashed] (0.776,-0.253) -- (1.051,-0.782) -- (0.919,-0.851) -- (0.816,-0.652) -- (0.420,-0.859) -- (0.351,-0.727) -- (0.747,-0.520) -- (0.643,-0.322) -- cycle;
\draw[slotblue, line width=0.9pt] (0.593,-0.439) -- (0.687,-1.027) -- (0.540,-1.051) -- (0.505,-0.830) -- (0.063,-0.901) -- (0.040,-0.754) -- (0.481,-0.683) -- (0.446,-0.463) -- cycle;
\end{tikzpicture}}%
\begin{tikzpicture}[font=\footnotesize,
  box/.style={draw=inkmuted, rounded corners=2pt, align=center, inner sep=3pt, fill=white, minimum height=6.5mm},
  ours/.style={box, draw=slotblue, line width=0.9pt},
  arr/.style={-{Stealth[length=1.8mm]}, inkmuted, line width=0.7pt},
  stepl/.style={font=\footnotesize\bfseries, text=inkmuted, anchor=south west, inner sep=0}]
\node[anchor=south west, inner sep=0] (t1) at (0,0) {\usebox{\tryone}};
\node[anchor=south west, inner sep=0] (t2) at (1.2,0) {\usebox{\trytwo}};
\node[anchor=south west, inner sep=0] (t3) at (2.4,0) {\usebox{\trythree}};
\node[stepl] at (0,1.55) {1. Push first};
\node[font=\scriptsize, text=inkmuted, anchor=north west, align=left, inner sep=0] at (0,-0.12) {three test pushes; only noisy\\pose tracks are observed};
\node[anchor=south west, inner sep=0] (mem) at (4.25,-0.62) {\begin{tikzpicture}
\begin{axis}[width=5.0cm, height=3.0cm, ybar, bar width=2.1pt, ymin=-1.7, ymax=2.7,
  xtick={0,...,7}, xticklabels={$v_n$,$v_t$,$\omega$,$v_n$,$v_t$,$\omega$,$\sigma$,$\sigma$},
  ytick={-1,0,1}, tick label style={font=\scriptsize, text=inkmuted}, axis line style={inkmuted!70}, tick style={inkmuted!70},
  axis x line*=bottom, axis y line*=left, ymajorgrids, grid style={gridgray}, enlarge x limits=0.08,
  legend style={font=\scriptsize, draw=none, fill=white, fill opacity=0.85, text opacity=1, at={(0.5,0.99)}, anchor=north, legend columns=2, inner sep=1pt}]
\addplot[fill=slotblue, draw=none] table[x=slot, y=rough] {data_memory.dat};
\addlegendentry{rough}
\addplot[fill=slotaqua, draw=none] table[x=slot, y=slippery] {data_memory.dat};
\addlegendentry{frictionless}
\end{axis}\end{tikzpicture}};
\node[stepl] at (4.35,1.55) {2. Contact memory $m$ (8 numbers)};
\node[anchor=south west, inner sep=0] (mot) at (9.55,0) {\usebox{\motionbox}};
\node[stepl] at (9.55,1.55) {3. Motion head};
\node[font=\scriptsize, text=inkmuted, anchor=north west, align=left, inner sep=0] at (9.55,-0.12) {predicted pose $\hat p(t)$\\at 0.5\,s and 1\,s};
\node[box, minimum width=31mm, font=\scriptsize] (query) at (13.65,1.55) {query frames, pose, pusher, actions};
\node[box, minimum width=31mm] (pred) at (13.65,0.6) {V-JEPA 2 predictor};
\node[ours, minimum width=31mm] (four) at (13.65,-0.35) {Fourier pose head};
\node[stepl] at (12.1,-1.05) {4. Predict};
\node[box, text width=19mm] (out) at (17.05,0.15) {future V-JEPA~2\\features $\hat y(t)$};
\draw[arr] (t3.east) -- (mem.west |- t3.east);
\draw[arr] (mem.east |- t3.east) -- (mot.west);
\draw[arr, slotblue] (mot.east) -- node[above, font=\scriptsize, text=slotblue, pos=0.45]{tokens} (pred.west);
\draw[arr, slotblue] (mot.east) -- (four.west);
\draw[arr] (query.south) -- (pred.north);
\draw[arr] (pred.east) -- (out.west |- pred.east);
\draw[arr, slotblue] (four.east) -- node[below, font=\scriptsize, text=slotblue, pos=0.5]{$+$} (out.west |- four.east);
\end{tikzpicture}
\caption{\method{} on a simulated confirmation object. (1)~Three test pushes (light: start, dark: end; red: pusher). (2)~The Bayesian contact memory: posterior mean normal, tangential and angular response at two lever arms and two uncertainties, shown for the rough ($\mu=0.35$) and the frictionless version of the same object. The tangential responses $v_t$ reveal friction. (3)~A motion head turns the memory and the query context into predicted poses (outlines). (4)~The pose enters the V-JEPA~2 predictor as tokens and through a Fourier pose head that corrects the predicted features. The query context also feeds the predictor. Blue marks our additions; the no-memory arm has the same architecture with $m=0$.}
\label{fig:schematic}
\end{figure*}

\section{Related Work}

\textbf{Latent world models.} Recurrent and imagination-based world models learn compact dynamics for control~\cite{ha2018recurrent,hafner2025dreamerv3}. JEPAs predict masked or future content in the representation space of an encoder~\cite{lecun2022path,assran2023ijepa,bardes2024vjepa}. V-JEPA~2 adds an action-conditioned predictor (V-JEPA~2-AC) on top of a frozen video encoder~\cite{assran2025vjepa2}, and DINO-WM plans with a world model over frozen DINOv2 features, including on Push-T~\cite{zhou2025dinowm}. These models predict from what they see. Contextual world models condition dynamics on environment parameters~\cite{prasanna2024crssm}; Semigroup-JEPA gives a JEPA predictor the true physical parameter~\cite{liu2026sgjepa}, and AdaJEPA adapts a latent world model at test time from observed transitions~\cite{wang2026adajepa}. PokeWorld asks which hidden quantities (mass, drag, contact stiffness) the latent states of action-conditioned world models encode when visually identical objects are pushed~\cite{tan2026latent}. We instead infer the parameters from a few trials before the forecast, feed them to a predictor in a frozen video-encoder feature space, and measure what limits the gain.

\textbf{Physical properties from video and interaction.} Physics engines combined with learning infer mass and friction from videos~\cite{wu2015galileo,wu2016physics101}. Interaction networks learn object dynamics~\cite{battaglia2016interaction}, and benchmarks probe physical prediction from vision~\cite{bear2021physion,yi2020clevrer}. ComPhy~\cite{chen2022comphy} and Physion++~\cite{tung2023physionpp} require inferring hidden properties such as mass, friction or elasticity from how objects interact. Features of video foundation models, V-JEPA~2 among them, encode dynamic properties such as elasticity and friction when the motion is observed~\cite{zhan2026idpp}. PIN-WM identifies friction and restitution from a few interactions with a differentiable physics simulator~\cite{li2025pinwm}. We infer the properties from a few active trials \emph{before} the forecast, and our predictor has no simulator: it forecasts encoder features. Interactive perception argues that acting reveals what passive perception cannot~\cite{bohg2017interactive}. DensePhysNet~\cite{xu2019densephysnet} learns physical representations from active interactions, and environment-probing policies~\cite{zhou2019epi} explicitly probe before acting. Planar pushing is a classical testbed for such questions~\cite{mason1986pushing,yu2016pushing,bauza2017probabilistic,ajay2018augmenting}, and Push-T is a standard visuomotor benchmark~\cite{florence2022ibc,chi2023diffusionpolicy}.

\textbf{Adaptation and system identification.} Hidden-parameter MDPs formalize dynamics that differ by a few latent parameters~\cite{doshivelez2016hipmdp}. Policies conditioned on identified or latent physical parameters adapt to new dynamics~\cite{yu2017uposi,kumar2021rma}, learned dynamics models can be adapted online from recent experience~\cite{nagabandi2019adapt}, and SimDist keeps a simulation-pretrained encoder fixed while adapting only its latent dynamics to real-world data~\cite{levy2026simdist}. Neural processes and set encoders summarize a context set into a conditioning vector~\cite{garnelo2018cnp,zaheer2017deepsets}. Closest to our work, experience-embedded visual foresight encodes five earlier push videos of a novel object into such a vector and conditions a pixel-space video predictor on it, evaluated on real pushes of unseen objects~\cite{lin2020evf,bauza2019omnipush}. Our memory plays the same role, but its targets are frozen video-encoder features, its gain is confirmed on held-out objects under a protocol fixed in advance, and we show that the pathway from memory to those features decides the gain.

\textbf{Fourier features.} Random Fourier features approximate shift-invariant kernels with a linear readout~\cite{rahimi2007rff}, and Fourier input mappings let networks fit high-frequency functions of low-dimensional coordinates~\cite{tancik2020fourier}. We use them to map object pose to its appearance in V-JEPA~2 feature space.

\section{Setting and Method}

\subsection{Task: forecasting pushes of objects with hidden physics}

A T-shaped block moves on a plane in a two-dimensional rigid-body simulator with Coulomb contact (no gravity and no damping, so the block glides between contacts; 2.5-ms steps). Its shape is known; three properties are hidden: the ratio $\rho$ of moment of inertia to mass, the restitution $e$ and the friction coefficient $\mu$. For in-distribution (ID) objects, $\rho$ is log-uniform in $[1200,6000]$\,px$^2$ and $e$ is uniform in $[0.05,0.75]$. Each object appears twice, with $\mu=0$ and $\mu=0.35$, under identical pushes. Out-of-distribution (OOD) objects have low ($[650,1000]$) or high ($[7500,10500]$) inertia ratio, or high restitution ($[0.85,0.95]$), all outside the training ranges. The simulator sets $\rho$ independently of the drawn shape. Mass confined to the T cannot give $\rho>5850$\,px$^2$, the squared distance of its farthest point from the center of mass, so the uppermost training values and all high-inertia OOD objects are stress tests rather than physically realizable T blocks.

\textbf{Experience.} Before the query, a pusher performs three calibration pushes at lever arms of 25--45\,px from the center line, at random orientations and speeds. The agent observes only noisy pose tracks (position noise 0.5--2\,px, angle noise 0.01--0.04\,rad).

\textbf{Query and target.} The query is an off-center push of the same object from a new pose. The predictor sees two rendered frames before the push ($-0.1$\,s and $0$\,s, $256\times256$ pixels), the noisy initial pose, the pusher position and the planned push velocities. The targets are the features of a frozen V-JEPA~2 ViT-L/16 encoder~\cite{assran2025vjepa2} (layer 24, $2\times2$ spatial pooling, $4\times1024=4096$ dimensions) for the two-frame clips that end 0.2, 0.5 and 1.0\,s after the push starts. The renderer draws only the T shape, in one fixed color, and the pusher, so physics affects the targets only through motion. The pusher first touches the object after a median of 0.18\,s (95th percentile 0.30\,s). At 0.2\,s contact has therefore at most just begun, whereas every push has made contact by 0.35\,s.

\textbf{Data.} Training uses 2,048 ID objects (4,096 pushes) and checkpoint selection uses 256 validation objects. The confirmation set has 512 fresh ID objects and 128 fresh OOD objects, all with both friction values. It was generated only after all models were sealed.

\subsection{The predictor}
\label{sec:predictor}

We adapt the official V-JEPA~2-AC predictor code into a compact action-conditioned transformer (width 128, two blocks, four heads, causal attention over frame groups; 845,056 parameters). It is not the released, much larger V-JEPA~2-AC model: we train this compact predictor from scratch on frozen V-JEPA~2 features to minimize the mean squared error of the three standardized future feature maps. The baseline sees only the query: its observed frames, noisy initial pose, pusher position and actions. It has no access to the calibration pushes. This isolates the value of trying.

\subsection{Contact-experience memory}

From each completed calibration push, a linear fit to the noisy track gives the post-contact normal, tangential and angular velocity, and its residuals estimate the observation noise. A grid posterior over $(\log\rho,e,\mu)$ ($161\times129\times25$ points; uniform priors on $\log\rho$ and $e$; half of the friction prior mass at $\mu=0$) is computed from these responses with a closed-form planar impulse model with Coulomb stick and slip, and 16 particles are drawn. The grid spans $\rho\in[600,12000]$\,px$^2$, $e\in[0,0.98]$ and $\mu\in[0,0.6]$, so it contains the OOD parameters even though the predictor never saw them in training. The memory thus uses the simulator's contact-model class and a friction prior matched to our two friction values; we discuss this in Section~\ref{sec:limits}. The memory $m$ reports, for two reference lever arms (25 and 45\,px), the posterior mean response (normal, tangential and angular velocity) and its spread: $3\times2+2=8$ numbers. The memory is deterministic, has no learned parameters and never sees the query.

\subsection{From memory to features: the pose pathway}

A small motion head (7,299 parameters) combines the memory and the query context multiplicatively and predicts the object's body-frame displacement and heading at six times up to 1\,s. A pose loss supervises it. From the predicted poses at $t-0.1$ and $t$, we form $\hat p(t)=(x,y,\sin\theta,\cos\theta,v_x,v_y,\omega)$ for each target time. The pose reaches the predictor in two ways:
\begin{enumerate}
\item \textbf{Pose tokens (P1).} A linear embedding of $\hat p(t)$ (1,024 parameters) is inserted at the target-time positions of the predictor.
\item \textbf{Fourier pose head (P2 = P1 + head).} Let $z$ collect the object pose at $t$ and $t-0.1$ (position divided by the 512-px arena size, heading as sine and cosine) and the pusher position at those times; the pusher position follows from the known actions. With a fixed random matrix $W\in\mathbb{R}^{1024\times12}$ ($W_{ij}\sim\mathcal N(0,9)$) and phases $b\sim\mathcal U[0,2\pi)$, the head adds $r(t)=U\,V\,[z;\cos(Wz+b)]$ to the predicted features, where the affine maps $V$ and $U$ (with biases) form a rank-64 bottleneck (332,608 parameters) and $U$ starts at zero.
\end{enumerate}
Both arms use this architecture (1,185,987 parameters); the \emph{no-memory arm} sets $m=0$. The \emph{plain predictor} of Section~\ref{sec:predictor}, without motion head and pose pathway, serves as a secondary reference. Training minimizes the feature loss plus $0.5\times$(translation + orientation) pose loss, with AdamW~\cite{loshchilov2019adamw} (learning rate $3\cdot10^{-4}$, weight decay 0.01, batch 16, 12,000 updates, gradient clipping 1).

\subsection{What can physical knowledge buy? An information ladder}
\label{sec:ladder}

Let $C$ be what the baseline sees, $E$ the calibration experience, $m=g(E)$ any memory, $\vartheta=(\rho,e,\mu)$ the true parameters, $P$ the true object poses in the target clips and $Y$ the target features. Write $\epsilon(Z)=\mathbb E\|Y-\mathbb E[Y\mid Z]\|^2$ for the error of the best predictor given information $Z$.

\begin{proposition}[Information ladder]
Assume (i) the experience depends on the query only through the object, $E\perp(Y,P)\mid(C,\vartheta)$ (the query pose and actions are drawn independently of the calibration outcomes), and (ii) the targets depend on the physics only through motion, $Y\perp(\vartheta,E)\mid(C,P)$. Then for every memory $m=g(E)$
\begin{equation}
\epsilon(C)\;\ge\;\epsilon(C,m)\;\ge\;\epsilon(C,\vartheta)\;\ge\;\epsilon(C,P).
\label{eq:ladder}
\end{equation}
\end{proposition}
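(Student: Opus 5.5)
The plan rests on one standard fact: conditional expectation is an $L^2$ projection. If $\sigma(Z_1)\subseteq\sigma(Z_2)$, then $\epsilon(Z_1)\ge\epsilon(Z_2)$, because $\mathbb E[Y\mid Z_2]$ minimizes $\mathbb E\|Y-f\|^2$ over all $\sigma(Z_2)$-measurable $f$, and $\mathbb E[Y\mid Z_1]$ is one such $f$. A second fact handles the cases where the information sets are not nested: if $Y\perp Z_2\mid Z_1$, then $\mathbb E[Y\mid Z_1,Z_2]=\mathbb E[Y\mid Z_1]$, and so $\epsilon(Z_1)=\epsilon(Z_1,Z_2)$. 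Each rung of \eqref{eq:ladder} will come from combining these two facts with the two assumptions.

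\textbf{First rung.} The inequality $\epsilon(C)\ge\epsilon(C,m)$ is immediate from monotonicity, since $\sigma(C)\subseteq\sigma(C,m)$.

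\textbf{Second rung.} Here $m=g(E)$ is not a function of $\vartheta$, so the two information sets are not nested and we must go through assumption (i). Assumption (i) gives $E\perp Y\mid(C,\vartheta)$, hence also $m\perp Y\mid(C,\vartheta)$. The second fact then yields $\epsilon(C,\vartheta)=\epsilon(C,\vartheta,m)$. Monotonicity applied to $\sigma(C,m)\subseteq\sigma(C,\vartheta,m)$ gives $\epsilon(C,m)\ge\epsilon(C,\vartheta,m)=\epsilon(C,\vartheta)$.

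\textbf{Third rung.} This is the same pattern with assumption (ii). From $Y\perp(\vartheta,E)\mid(C,P)$, and in particular $Y\perp\vartheta\mid(C,P)$, we get $\epsilon(C,P)=\epsilon(C,P,\vartheta)$. Monotonicity applied to $\sigma(C,\vartheta)\subseteq\sigma(C,\vartheta,P)$ gives $\epsilon(C,\vartheta)\ge\epsilon(C,\vartheta,P)=\epsilon(C,P)$. Only the $\vartheta$ part of (ii), together with the $Y$ part of (i), is needed.

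The only delicate step is the rigorous version of ``adding conditionally irrelevant information does not change the conditional expectation.'' Conditional independence $Y\perp Z_2\mid Z_1$ means that the regular conditional law of $Y$ given $(Z_1,Z_2)$ depends only on $Z_1$. Integrating this law yields $\mathbb E[Y\mid Z_1,Z_2]=\mathbb E[Y\mid Z_1]$ almost surely. This requires $Y$ to be square-integrable, which holds because the features are bounded and standardized, and all variables to take values in standard Borel spaces, which they do here. With these two facts in place, the remaining steps are short.
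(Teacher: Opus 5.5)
Your proof is correct and follows the paper's argument: adding information never increases the conditional-mean error, and assumptions (i) and (ii) supply the equalities that let you drop $E$ (or $m$) and $\vartheta$ again. The only cosmetic difference is on the second rung, where you pass through $\epsilon(C,m,\vartheta)$, using that $m=g(E)$ inherits $m\perp Y\mid(C,\vartheta)$; the paper's sketch instead goes through $\epsilon(C,m)\ge\epsilon(C,E)\ge\epsilon(C,E,\vartheta)=\epsilon(C,\vartheta)$.
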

\emph{Proof sketch.} Conditioning on more information never increases the squared error of the conditional mean. Hence $\epsilon(C,m)\ge\epsilon(C,E)\ge\epsilon(C,E,\vartheta)$, and the last term equals $\epsilon(C,\vartheta)$ by (i). Likewise $\epsilon(C,\vartheta)\ge\epsilon(C,\vartheta,P)=\epsilon(C,P)$ by (ii). \hfill$\square$

Assumption (ii) holds by construction, because the renderer reads only geometry. Assumption (i) holds up to one shared variable: each object's observation-noise level also sets the noise of the query pose, so the experience can reveal how reliable the observed initial pose is. Adding this level to $\vartheta$ restores (i); our oracle arm feeds only $(\rho,e,\mu)$, so it is a reference for the physical parameters, not for everything the experience carries. The proposition orders the errors of \emph{ideal} predictors; in a deterministic simulator the lower rungs can be small. Its value here is a measurement recipe. The ladder implies a \emph{gain budget}: whatever the memory, its gain over the baseline is at most the gain of the true parameters, which in turn is at most the gain of knowing the future pose. A trained architecture realizes only part of each rung, and its measured rungs need not follow the ideal ordering, so we measure them by feeding $m$, $\vartheta$ (``oracle memory'') and $P$ (``true pose'') through the \emph{same} pathway. The accounting identity
\begin{equation}
\Delta_m=\underbrace{\frac{\Delta_m}{\Delta_\vartheta}}_{\text{memory quality}}\times\underbrace{\Delta_\vartheta}_{\text{set by the pathway}}
\end{equation}
separates two levers, without implying that they are independent. Improving the memory raises the first factor; improving the pathway raises the second, which mattered more in our case.

Why should a Fourier head help? The renderer and encoder make the features a smooth but highly nonlinear function of the object's position and heading. A linear pose embedding feeding a two-block predictor struggles to represent such a map from finite data, while random Fourier features form a fixed nonlinear basis that approximates a Gaussian kernel~\cite{rahimi2007rff}, a standard remedy for this kind of low-dimensional, high-frequency mapping~\cite{tancik2020fourier}. Indeed, a ridge regression on Fourier features of the true future object pose and the known pusher position, using no video at all, already reaches a 1-s error of 0.218, far below the plain predictor's 0.411 (development set).

\section{Evaluation Protocol}

\textbf{Metric.} We report the mean squared error over the 4096 feature dimensions, in units standardized by training statistics (NMSE; lower is better). The primary endpoint is the 1-s error on $\mu=0.35$ pushes of ID objects.

\textbf{Per-model calibration.} A model can look better merely by predicting values closer to the mean. We therefore shrink every model toward the training mean, $\bar y+\alpha(\hat y-\bar y)$. Each model, checkpoint and horizon gets its own $\alpha\in[0,1]$. It is fit on one half of the evaluation objects (split by object-seed parity, keeping both friction versions of an object together) and applied to the other half, so every object is scored once; maps are never shared across models. Uncalibrated errors are reported too.

\textbf{Fairness.} Both arms share the architecture, initialization, batch order, data, budget, pose loss and selection rule. We report two checkpoints: the earliest of the checkpoints saved every 200 updates that attains the minimum validation feature loss (``selected''), and the end of training after 12,000 updates (``final''). A gain must hold at both, so a predictor that is still undertrained cannot win.

\textbf{Development and confirmation.} Design choices were made on the validation set using only oracle diagnostics (true pose, true parameters). The memory comparison itself was examined twice there. The confirmation was then specified in advance. The protocol file was frozen and hashed (SHA-256 prefix \texttt{bdaaeafd}); 8 new seeds per model were trained, and all 48 checkpoints were frozen and hashed before the confirmation objects were generated; every object was scored once and none was removed. The fresh objects come from the same generator (ID) and from a shifted-physics cohort (OOD), so the confirmation tests generalization to new objects and training runs, not to new environments. The rule requires that, at both checkpoints, the 95\pct{} interval of the mean difference lie below zero and at least 6 of 8 seeds improve. The interval treats seeds \emph{and} objects as random. For the seed-by-object matrix $D$ of paired differences, we use the conservative two-way variance estimate $\hat V=(\mathrm{MS}_{\text{seed}}+\mathrm{MS}_{\text{object}})/(SO)$ with Satterthwaite degrees of freedom~\cite{satterthwaite1946}, so the claim generalizes to new training runs and new objects. Secondary rows are not corrected for multiplicity.

\textbf{Post hoc analyses.} After the confirmation, we measured the information ladder and a same-information control on the same fresh objects, and re-analyzed the calibration and the OOD cohort. The 56 additional models (seven arms, 8 seeds each) used the sealed recipe and seeds and were hashed before scoring. These analyses are labelled post hoc; no model or setting was chosen on the confirmation objects, and the confirmatory result is unchanged.

\section{Results}

\subsection{Confirmation: the memory improves the forecast on fresh objects}

\begin{table*}[t]
\caption{Pre-specified held-out confirmation on fresh objects (1-s feature NMSE unless noted; lower is better). $\Delta$ is ours minus the comparison with a 95\pct{} interval over seeds and objects; ``seeds'' counts seeds that improved. The post hoc rows were added after the confirmation; their OOD groups are small and descriptive.}
\label{tab:confirm}
\centering
\setlength{\tabcolsep}{4.5pt}
\small
\begin{tabular}{@{}llccccc@{}}
\toprule
Comparison & checkpoint & comparison model & ours & $\Delta$ [95\pct{} CI over seeds and objects] & relative & seeds improved\\
\midrule
\multicolumn{7}{@{}l}{\emph{Primary: memory vs.\ no memory, ID, $\mu=0.35$}}\\
 & sel. & 0.416 & 0.367 & \ensuremath{-}0.048 [\ensuremath{-}0.060, \ensuremath{-}0.037] & \ensuremath{-}11.6\pct & 8/8\\
 & final & 0.417 & 0.364 & \ensuremath{-}0.053 [\ensuremath{-}0.064, \ensuremath{-}0.042] & \ensuremath{-}12.8\pct & 8/8\\
\midrule
\multicolumn{7}{@{}l}{\emph{Secondary (pre-declared)}}\\
vs.\ plain predictor & sel. & 0.435 & 0.367 & \ensuremath{-}0.067 [\ensuremath{-}0.078, \ensuremath{-}0.057] & \ensuremath{-}15.5\pct & 8/8\\
 & final & 0.436 & 0.364 & \ensuremath{-}0.073 [\ensuremath{-}0.084, \ensuremath{-}0.061] & \ensuremath{-}16.6\pct & 8/8\\
OOD objects & sel. & 0.459 & 0.405 & \ensuremath{-}0.054 [\ensuremath{-}0.070, \ensuremath{-}0.037] & \ensuremath{-}11.7\pct & 8/8\\
 & final & 0.469 & 0.403 & \ensuremath{-}0.066 [\ensuremath{-}0.084, \ensuremath{-}0.047] & \ensuremath{-}14.0\pct & 8/8\\
Frictionless ($\mu=0$) & sel. & 0.431 & 0.398 & \ensuremath{-}0.034 [\ensuremath{-}0.048, \ensuremath{-}0.019] & \ensuremath{-}7.8\pct & 8/8\\
Horizon 0.5\,s & sel. & 0.297 & 0.271 & \ensuremath{-}0.026 [\ensuremath{-}0.032, \ensuremath{-}0.021] & \ensuremath{-}8.8\pct & 8/8\\
Horizon 0.2\,s (contact onset) & sel. & 0.329 & 0.329 & \ensuremath{-}0.001 [\ensuremath{-}0.015, 0.014] & \ensuremath{-}0.2\pct & 4/8\\
Uncalibrated & sel. & 0.434 & 0.387 & \ensuremath{-}0.047 [\ensuremath{-}0.061, \ensuremath{-}0.033] & \ensuremath{-}10.8\pct & 8/8\\
Position error (px) & sel. & 27.5 & 14.2 & \ensuremath{-}13.3 [\ensuremath{-}15.2, \ensuremath{-}11.4] & \ensuremath{-}48.5\pct & 8/8\\
Heading error (rad) & sel. & 0.252 & 0.153 & \ensuremath{-}0.099 [\ensuremath{-}0.126, \ensuremath{-}0.073] & \ensuremath{-}39.4\pct & 8/8\\
\midrule
\multicolumn{7}{@{}l}{\emph{Post hoc, on the same fresh objects}}\\
vs.\ raw measurements & sel. & 0.364 & 0.367 & +0.003 [\ensuremath{-}0.005, 0.012] & +1.0\pct & 3/8\\
(same information) & final & 0.363 & 0.364 & +0.001 [\ensuremath{-}0.008, 0.010] & +0.3\pct & 4/8\\
$\alpha$ fit on validation set & sel. & 0.416 & 0.367 & \ensuremath{-}0.048 [\ensuremath{-}0.060, \ensuremath{-}0.037] & \ensuremath{-}11.6\pct & 8/8\\
 & final & 0.417 & 0.364 & \ensuremath{-}0.053 [\ensuremath{-}0.064, \ensuremath{-}0.042] & \ensuremath{-}12.7\pct & 8/8\\
OOD, high inertia ($n=32$) & sel. & 0.433 & 0.333 & \ensuremath{-}0.100 [\ensuremath{-}0.128, \ensuremath{-}0.071] & \ensuremath{-}23.0\pct & 8/8\\
OOD, high restitution ($n=64$) & sel. & 0.454 & 0.406 & \ensuremath{-}0.048 [\ensuremath{-}0.072, \ensuremath{-}0.024] & \ensuremath{-}10.5\pct & 8/8\\
OOD, low inertia ($n=32$) & sel. & 0.495 & 0.475 & \ensuremath{-}0.020 [\ensuremath{-}0.053, 0.012] & \ensuremath{-}4.1\pct & 8/8\\
\bottomrule
\end{tabular}
\end{table*}

Table~\ref{tab:confirm} shows the confirmation. The memory lowers the primary error from 0.416 to 0.367 at the selected checkpoint and from 0.417 to 0.364 at the end of training (\ensuremath{-}11.6\pct{} and \ensuremath{-}12.8\pct{}). All 8 seeds improve at both checkpoints, and both intervals lie well below zero, so the pre-specified rule is met. Fig.~\ref{fig:seeds} shows that the effect is consistent seed by seed. The comparison with the plain predictor measures the whole extension (memory, motion head and Fourier pose head) and gives \ensuremath{-}15.5\pct{} and \ensuremath{-}16.6\pct{}; at the selected checkpoint about 4.3 of these points come from the pose pathway without memory and 11.1 from the memory. The gain also transfers to objects whose inertia or restitution lies outside the training range (\ensuremath{-}11.7\pct{} and \ensuremath{-}14.0\pct{}), and it costs nothing on frictionless objects, where the error also drops by 7.8\pct{}. By shift type (post hoc, 32--64 objects per group), the gain is largest for high inertia (\ensuremath{-}23.0\pct{}), clear for high restitution (\ensuremath{-}10.5\pct{}) and smallest for low inertia (\ensuremath{-}4.1\pct{} and \ensuremath{-}6.9\pct{}), whose interval includes zero at both checkpoints. Fitting each model's $\alpha$ on its validation predictions, instead of cross-fitting it on the confirmation objects, leaves the primary difference unchanged (\ensuremath{-}0.048 and \ensuremath{-}0.053; post hoc). The memory also cuts the predicted 1-s position and heading errors by 48.5\pct{} and 39.4\pct{}. This is the physical improvement that the features inherit. On the development objects the no-memory arm scored 0.397 rather than 0.416, yet the memory's relative gain was nearly the same (\ensuremath{-}11.4\pct{}). Fig.~\ref{fig:qual} shows the first four confirmation objects in seed order, without selection. Each object appears twice under the same push, once frictionless and once rough. The no-memory predictor receives identical inputs for the two versions and must predict a single pose for both. \method{} separates them, and its position error is lower in 6 of the 8 cases.

\begin{figure}[t]
\centering
\begin{tikzpicture}
\begin{axis}[
  width=0.97\columnwidth, height=44mm,
  xmin=-0.6, xmax=3.6, ymin=-0.105, ymax=0.012,
  xtick={0,1,2,3},
  xticklabels={{ID, sel.},{ID, final},{OOD, sel.},{OOD, final}},
  ylabel={$\Delta$ 1-s NMSE (ours $-$ no memory)},
  scaled y ticks=false, yticklabel style={/pgf/number format/fixed, /pgf/number format/precision=2},
  ytick={-0.1,-0.08,-0.06,-0.04,-0.02,0},
  ylabel style={font=\footnotesize, text=inkmuted},
  tick label style={font=\footnotesize, text=inkmuted},
  axis line style={inkmuted}, tick style={inkmuted},
  ymajorgrids, grid style={gridgray},
  axis x line*=bottom, axis y line*=left,
  clip=false]
\addplot[inkmuted, dashed, line width=0.6pt, domain=-0.6:3.6, samples=2] {0};
\addplot[only marks, mark=*, mark size=1.7pt, color=slotblue, fill opacity=0.7, draw opacity=0.9]
  table[x expr=\thisrow{cell}-0.18+0.05*\thisrow{seed}, y=diff] {data_seeds.dat};
\addplot[only marks, mark=-, mark size=6pt, line width=1.1pt, color=black,
  error bars/.cd, y dir=both, y explicit, error bar style={line width=1.1pt, black}, error mark=]
  table[x expr=\thisrow{cell}+0.32, y=mean, y error plus expr=\thisrow{hi}-\thisrow{mean}, y error minus expr=\thisrow{mean}-\thisrow{lo}] {data_seeds_ci.dat};
\node[font=\scriptsize, text=inkmuted, anchor=south west] at (axis cs:-0.55,0.001) {no effect};
\end{axis}
\end{tikzpicture}
\caption{Confirmation, seed by seed. Blue dots are the 8 training seeds (each averaged over all fresh objects); black bars show the mean with its 95\pct{} interval over seeds and objects. Every seed lies below zero (lower error with memory) in every cell.}
\label{fig:seeds}
\end{figure}

\begin{figure*}[t]
\centering
\definecolor{rbg}{HTML}{F0F0F0}\definecolor{rblue}{HTML}{2E67B4}\definecolor{rred}{HTML}{C8372D}\begin{tikzpicture}[font=\scriptsize]
\fill[white] (0.000,-4.000) rectangle (4.000,0.000);
\draw[inkmuted!60, line width=0.3pt] (0.000,-4.000) rectangle (4.000,0.000);
\path[fill=inkmuted!10] (2.427,-0.718) -- (3.372,-2.083) -- (3.031,-2.320) -- (2.085,-0.955) -- cycle;
\path[fill=inkmuted!10] (2.440,-1.467) -- (2.676,-1.808) -- (1.652,-2.517) -- (1.416,-2.176) -- cycle;
\fill[rred] (3.613,-1.485) circle (0.208);
\draw[-{Stealth[length=1.4mm]}, rred, line width=0.7pt] (3.415,-1.637) -- (2.956,-1.992);
\path[fill=inkmuted!40] (1.962,-1.565) -- (2.082,-3.222) -- (1.668,-3.252) -- (1.548,-1.595) -- cycle;
\path[fill=inkmuted!40] (1.593,-2.216) -- (1.623,-2.631) -- (0.381,-2.721) -- (0.351,-2.307) -- cycle;
\path[fill=inkmuted!80] (1.705,-1.361) -- (1.992,-2.996) -- (1.583,-3.068) -- (1.296,-1.432) -- cycle;
\path[fill=inkmuted!80] (1.404,-2.046) -- (1.475,-2.455) -- (0.249,-2.670) -- (0.177,-2.261) -- cycle;
\draw[slotorange, line width=1.0pt, dashed] (2.187,-1.547) -- (1.882,-3.179) -- (1.474,-3.103) -- (1.588,-2.491) -- (0.364,-2.262) -- (0.440,-1.854) -- (1.664,-2.083) -- (1.779,-1.470) -- cycle;
\draw[slotblue, line width=1.0pt] (2.058,-1.622) -- (2.010,-3.282) -- (1.595,-3.270) -- (1.613,-2.647) -- (0.368,-2.611) -- (0.380,-2.196) -- (1.625,-2.232) -- (1.643,-1.610) -- cycle;
\draw[slotblue, line width=1.0pt, densely dotted] (1.652,-1.387) -- (1.915,-3.027) -- (1.505,-3.093) -- (1.406,-2.478) -- (0.176,-2.675) -- (0.111,-2.265) -- (1.340,-2.068) -- (1.242,-1.453) -- cycle;
\node[anchor=north, align=center, text=inkmuted] at (2.000,-4.050) {\textbf{Object 1}\\{\color{slotblue}\method{} 1 / 5}\quad {\color{slotorange}no memory 14 / 14}};
\fill[white] (4.320,-4.000) rectangle (8.320,0.000);
\draw[inkmuted!60, line width=0.3pt] (4.320,-4.000) rectangle (8.320,0.000);
\path[fill=inkmuted!10] (7.718,-2.889) -- (5.771,-3.158) -- (5.704,-2.671) -- (7.651,-2.403) -- cycle;
\path[fill=inkmuted!10] (6.921,-2.503) -- (6.434,-2.570) -- (6.233,-1.110) -- (6.720,-1.043) -- cycle;
\fill[rred] (7.530,-3.541) circle (0.246);
\draw[-{Stealth[length=1.4mm]}, rred, line width=0.7pt] (7.472,-3.252) -- (7.338,-2.577);
\path[fill=inkmuted!40] (7.568,-0.744) -- (6.838,-2.569) -- (6.382,-2.387) -- (7.111,-0.561) -- cycle;
\path[fill=inkmuted!40] (6.838,-1.246) -- (6.655,-1.702) -- (5.286,-1.155) -- (5.469,-0.699) -- cycle;
\path[fill=inkmuted!80] (7.292,-0.915) -- (6.107,-2.483) -- (5.715,-2.187) -- (6.900,-0.618) -- cycle;
\path[fill=inkmuted!80] (6.456,-1.207) -- (6.159,-1.599) -- (4.983,-0.710) -- (5.279,-0.318) -- cycle;
\draw[slotorange, line width=1.0pt, dashed] (7.397,-0.834) -- (6.228,-2.415) -- (5.833,-2.122) -- (6.271,-1.530) -- (5.086,-0.653) -- (5.378,-0.258) -- (6.563,-1.134) -- (7.002,-0.542) -- cycle;
\draw[slotblue, line width=1.0pt] (7.409,-0.484) -- (6.793,-2.351) -- (6.327,-2.197) -- (6.558,-1.497) -- (5.158,-1.035) -- (5.311,-0.568) -- (6.712,-1.030) -- (6.943,-0.330) -- cycle;
\draw[slotblue, line width=1.0pt, densely dotted] (7.092,-0.721) -- (5.913,-2.293) -- (5.520,-1.998) -- (5.962,-1.409) -- (4.782,-0.524) -- (5.077,-0.131) -- (6.257,-1.016) -- (6.699,-0.426) -- cycle;
\node[anchor=north, align=center, text=inkmuted] at (6.320,-4.050) {\textbf{Object 2}\\{\color{slotblue}\method{} 14 / 17}\quad {\color{slotorange}no memory 22 / 8}};
\fill[white] (8.640,-4.000) rectangle (12.640,0.000);
\draw[inkmuted!60, line width=0.3pt] (8.640,-4.000) rectangle (12.640,0.000);
\path[fill=inkmuted!10] (9.496,-2.552) -- (9.808,-0.536) -- (10.312,-0.614) -- (10.000,-2.630) -- cycle;
\path[fill=inkmuted!10] (10.117,-1.874) -- (10.195,-1.370) -- (11.707,-1.604) -- (11.629,-2.108) -- cycle;
\fill[rred] (9.116,-0.807) circle (0.255);
\draw[-{Stealth[length=1.4mm]}, rred, line width=0.7pt] (9.419,-0.849) -- (10.126,-0.948);
\path[fill=inkmuted!40] (9.819,-1.560) -- (11.626,-0.614) -- (11.863,-1.066) -- (10.056,-2.012) -- cycle;
\path[fill=inkmuted!40] (10.733,-1.657) -- (11.185,-1.421) -- (11.895,-2.776) -- (11.443,-3.012) -- cycle;
\path[fill=inkmuted!80] (9.924,-2.195) -- (11.440,-0.830) -- (11.781,-1.209) -- (10.265,-2.574) -- cycle;
\path[fill=inkmuted!80] (10.834,-2.062) -- (11.213,-1.721) -- (12.236,-2.857) -- (11.858,-3.199) -- cycle;
\draw[slotorange, line width=1.0pt, dashed] (10.282,-2.032) -- (11.942,-0.848) -- (12.238,-1.263) -- (11.616,-1.707) -- (12.504,-2.952) -- (12.089,-3.248) -- (11.201,-2.003) -- (10.578,-2.447) -- cycle;
\draw[slotblue, line width=1.0pt] (10.320,-1.160) -- (12.340,-0.875) -- (12.411,-1.380) -- (11.654,-1.487) -- (11.868,-3.001) -- (11.363,-3.073) -- (11.149,-1.558) -- (10.391,-1.665) -- cycle;
\draw[slotblue, line width=1.0pt, densely dotted] (10.278,-2.250) -- (11.981,-1.129) -- (12.262,-1.555) -- (11.623,-1.975) -- (12.464,-3.253) -- (12.038,-3.533) -- (11.197,-2.255) -- (10.558,-2.676) -- cycle;
\node[anchor=north, align=center, text=inkmuted] at (10.640,-4.050) {\textbf{Object 3}\\{\color{slotblue}\method{} 21 / 26}\quad {\color{slotorange}no memory 33 / 21}};
\fill[white] (12.960,-4.000) rectangle (16.960,0.000);
\draw[inkmuted!60, line width=0.3pt] (12.960,-4.000) rectangle (16.960,0.000);
\path[fill=inkmuted!10] (13.948,-0.949) -- (15.427,-0.692) -- (15.491,-1.062) -- (14.012,-1.319) -- cycle;
\path[fill=inkmuted!10] (14.567,-1.222) -- (14.936,-1.158) -- (15.129,-2.267) -- (14.759,-2.332) -- cycle;
\fill[rred] (15.050,-0.350) circle (0.188);
\draw[-{Stealth[length=1.4mm]}, rred, line width=0.7pt] (15.089,-0.572) -- (15.180,-1.089);
\path[fill=inkmuted!40] (14.700,-1.961) -- (15.972,-2.757) -- (15.773,-3.075) -- (14.501,-2.280) -- cycle;
\path[fill=inkmuted!40] (14.978,-2.578) -- (15.296,-2.777) -- (14.699,-3.731) -- (14.381,-3.532) -- cycle;
\path[fill=inkmuted!80] (14.262,-2.154) -- (15.648,-2.731) -- (15.503,-3.077) -- (14.118,-2.500) -- cycle;
\path[fill=inkmuted!80] (14.637,-2.716) -- (14.984,-2.861) -- (14.551,-3.900) -- (14.205,-3.756) -- cycle;
\draw[slotorange, line width=1.0pt, dashed] (14.543,-1.674) -- (15.754,-2.560) -- (15.533,-2.863) -- (15.079,-2.531) -- (14.414,-3.440) -- (14.111,-3.218) -- (14.776,-2.309) -- (14.321,-1.977) -- cycle;
\draw[slotblue, line width=1.0pt] (14.549,-1.879) -- (15.858,-2.613) -- (15.674,-2.940) -- (15.183,-2.665) -- (14.632,-3.647) -- (14.305,-3.463) -- (14.856,-2.481) -- (14.365,-2.206) -- cycle;
\draw[slotblue, line width=1.0pt, densely dotted] (14.208,-2.095) -- (15.574,-2.718) -- (15.418,-3.059) -- (14.906,-2.825) -- (14.439,-3.850) -- (14.098,-3.694) -- (14.565,-2.670) -- (14.053,-2.436) -- cycle;
\node[anchor=north, align=center, text=inkmuted] at (14.960,-4.050) {\textbf{Object 4}\\{\color{slotblue}\method{} 12 / 7}\quad {\color{slotorange}no memory 28 / 32}};
\end{tikzpicture}
\caption{Predicted 1-s poses on the first four confirmation objects (seed order, no selection; one model per arm, seed 1101, final checkpoint). Gray: the true pose of the frictionless (light) and rough (dark) version of the same object after the same push; faint: start. Orange dashed: the no-memory predictor, which must predict one pose for both versions because its inputs are identical. Blue: \method{}, solid for the frictionless and dotted for the rough version. Red disk and arrow: pusher and push direction. Below each panel: 1-s position errors in pixels (frictionless / rough).}
\label{fig:qual}
\end{figure*}

\subsection{Where the gain comes from: the ladder}
\label{sec:ladderres}

\begin{figure}[t]
\centering
\begin{tikzpicture}
\begin{axis}[
  width=0.93\columnwidth, height=36mm,
  xmin=0.28, xmax=0.45, ymin=-0.7, ymax=1.75,
  ytick={0,1}, yticklabels={{P1},{P2}},
  xlabel={1-s feature NMSE on fresh objects (lower is better)},
  xlabel style={font=\footnotesize, text=inkmuted},
  tick label style={font=\footnotesize, text=inkmuted},
  axis line style={inkmuted}, tick style={inkmuted},
  xmajorgrids, grid style={gridgray},
  axis x line*=bottom, axis y line*=left, y dir=reverse,
  clip=false,
  legend style={font=\scriptsize, draw=none, fill=none, legend columns=4, at={(0.5,1.02)}, anchor=south, /tikz/every even column/.append style={column sep=3pt}}]
\addplot[inkmuted!70, line width=1.2pt, forget plot] coordinates {(0.434,0) (0.361,0)};
\addplot[inkmuted!70, line width=1.2pt, forget plot] coordinates {(0.416,1) (0.297,1)};
\addplot[only marks, mark=square*, mark size=2.4pt, color=slotorange] coordinates {(0.434,0) (0.416,1)};
\addlegendentry{no memory}
\addplot[only marks, mark=*, mark size=2.8pt, color=slotblue] coordinates {(0.422,0) (0.367,1)};
\addlegendentry{memory (ours)}
\addplot[only marks, mark=diamond*, mark size=3pt, color=slotaqua] coordinates {(0.404,0) (0.321,1)};
\addlegendentry{oracle memory}
\addplot[only marks, mark=triangle*, mark size=2.8pt, color=inkmuted] coordinates {(0.361,0) (0.297,1)};
\addlegendentry{true pose}
\node[font=\scriptsize, text=slotblue, anchor=north] at (axis cs:0.422,0.18) {$-$2.7\pct};
\node[font=\scriptsize, text=slotblue, anchor=north] at (axis cs:0.367,1.18) {$-$11.6\pct};
\end{axis}
\end{tikzpicture}
\caption{Measured information ladder on the 512 fresh objects (post hoc; selected checkpoint; mean over 8 seeds). P1: pose tokens; P2: pose tokens plus the Fourier pose head. Each row feeds no memory, the memory, the true parameters (oracle) or the true future pose through the same pathway. The Fourier pose head lengthens the whole ladder, and the memory's gain grows with it: the memory collects 39\pct{} of the oracle's gain on P1 and 51\pct{} on P2.}
\label{fig:ladder}
\end{figure}

\begin{table}[t]
\caption{The ladder on the fresh objects, rung by rung (post hoc; 1-s NMSE at $\mu=0.35$; selected / final checkpoint; 8 seeds). Percentages are relative to no memory on the same pathway. Every rung improves on no memory for all 8 seeds, with a 95\pct{} interval over seeds and objects below zero. Ratios compare the absolute gains of P2 and P1, with two-way bootstrap 95\pct{} intervals over seeds and objects.}
\label{tab:ladder}
\centering
\setlength{\tabcolsep}{3.5pt}
\footnotesize
\begin{tabular}{@{}lcc@{}}
\toprule
Rung & P1: pose tokens & P2: + Fourier head\\
\midrule
No memory & 0.434 / 0.437 & 0.416 / 0.417\\
Memory (ours) & \ensuremath{-}2.7\pct{} / \ensuremath{-}5.0\pct & \ensuremath{-}11.6\pct{} / \ensuremath{-}12.8\pct\\
Oracle memory & \ensuremath{-}6.9\pct{} / \ensuremath{-}7.8\pct & \ensuremath{-}22.9\pct{} / \ensuremath{-}22.9\pct\\
True future pose & \ensuremath{-}16.8\pct{} / \ensuremath{-}17.5\pct & \ensuremath{-}28.5\pct{} / \ensuremath{-}28.2\pct\\
\midrule
Memory share of oracle gain & 39\pct{} / 64\pct & 51\pct{} / 56\pct\\
\midrule
Ratio P2/P1, oracle gain & \multicolumn{2}{c}{3.2 [2.5, 4.2] / 2.8 [2.3, 3.7]}\\
Ratio P2/P1, memory gain & \multicolumn{2}{c}{4.1 [2.4, 7.5] / 2.4 [1.8, 3.5]}\\
\bottomrule
\end{tabular}
\end{table}

Fig.~\ref{fig:ladder} and Table~\ref{tab:ladder} measure the ladder on the fresh objects. With pose tokens alone (P1), even the true future pose improves the forecast by only 16.8\pct{}, a perfect memory by 6.9\pct{}, and our memory by 2.7\pct{} (selected checkpoint). Yet on the development set a ridge regression on the true pose reaches 0.218: the features are highly pose-predictable, and P1 uses little of that. Adding the Fourier pose head (P2) lengthens every rung. The true-pose gain grows to 28.5\pct{}, the oracle-memory gain to 22.9\pct{} and our memory's gain to 11.6\pct{}. In absolute NMSE, the Fourier head multiplies the oracle gain by 3.2 and 2.8 at the two checkpoints (3.3 and 2.9 in relative terms) and the memory gain by 4.1 and 2.4. On the development set, where the Fourier head was chosen, the oracle ratio was 3.5--3.8$\times$; on the out-of-range objects it is 3.8--4.8$\times$. The memory's share of the oracle gain stays in a similar range as the pathway grows: it rises from 39\pct{} to 51\pct{} at the selected checkpoint and falls from 64\pct{} to 56\pct{} at the final one. In the terms of Section~\ref{sec:ladder}, the pathway set how large the measured oracle gain was, and the memory collected a similar fraction of it. The Fourier head also slightly helps the no-memory arm (0.434 to 0.416), which is why every comparison in this paper is made against a no-memory predictor with the \emph{same} pathway.

\subsection{What carries the gain: the trials, not their summary}
\label{sec:sameinfo}

The memory is a hand-designed Bayesian summary of the trials. To test whether this form matters, we gave the same motion head the 23 normalized measurements from which the memory is computed: for each push its lever arm and speed, the fitted normal, tangential and angular response and two fit residuals, plus the two noise estimates. The memory inputs were zero and, as in every arm, the predictor itself received no experience, so the measurements reached the forecast only through the motion head, exactly as the memory does; 8 seeds were trained with the sealed recipe (post hoc). The raw measurements give a similar gain. They lower the primary error by 12.5\pct{} and 13.0\pct{} relative to no memory, for all 8 seeds, and the memory differs from them by only +0.003 [\ensuremath{-}0.005, 0.012] and +0.001 [\ensuremath{-}0.008, 0.010] (Table~\ref{tab:confirm}). Across all 36 calibrated cells (two cohorts; $\mu=0$, $\mu=0.35$ and both pooled; three horizons; two checkpoints), no difference exceeds 0.009 in magnitude; one interval, for OOD objects at the final checkpoint, marginally excludes zero in favour of the raw measurements. Without a pre-specified equivalence margin, this shows no clear difference rather than equivalence. The gain of \method{} therefore does not require the Bayesian form of the summary; it comes from the trials and the pathway that delivers them. The memory keeps other advantages. It condenses the trials into eight physically meaningful numbers with an uncertainty, its size does not grow with the number of pushes, and it fits its training objects less tightly (4 of 8 seeds with a train/validation error ratio below 0.7, against 7 of 8 for the raw measurements). On pose the two trade off: the memory predicts the 1-s position better by 1.1 and 1.6\,px (the interval excludes zero at the final checkpoint), the raw measurements the heading by 0.025 and 0.016\,rad (at the selected checkpoint).

\subsection{When the gain appears: after contact}

\begin{figure}[t]
\centering
\begin{tikzpicture}
\begin{axis}[width=0.95\columnwidth, height=46mm, xmin=0, xmax=1.1, ymin=-17, ymax=9,
  xtick={0.2,0.5,1.0}, xlabel={forecast horizon (s)}, ylabel={error change (\%)},
  label style={font=\footnotesize, text=inkmuted}, tick label style={font=\footnotesize, text=inkmuted},
  axis line style={inkmuted}, tick style={inkmuted}, ymajorgrids, grid style={gridgray},
  axis x line*=bottom, axis y line*=left, clip=false,
  legend style={font=\scriptsize, draw=none, fill=none, at={(0.98,0.98)}, anchor=north east}]
\fill[inkmuted, opacity=0.12] (axis cs:0.12,-17) rectangle (axis cs:0.30,9);
\node[font=\scriptsize, text=inkmuted, anchor=south] at (axis cs:0.21,9) {first contact};
\addplot[inkmuted, dashed, line width=0.6pt, domain=0:1.1, samples=2, forget plot] {0};
\addplot[only marks, mark=*, mark size=2.2pt, color=slotblue, error bars/.cd, y dir=both, y explicit, error bar style={line width=0.9pt}]
  table[x expr=\thisrow{t}-0.018, y=rel, y error plus expr=\thisrow{hi}-\thisrow{rel}, y error minus expr=\thisrow{rel}-\thisrow{lo}, restrict expr to domain={\thisrow{cohort}}{0:0}] {data_time.dat};
\addlegendentry{in-distribution}
\addplot[only marks, mark=*, mark size=2.2pt, color=slotblue, fill=white, mark options={fill=white, line width=0.9pt}, error bars/.cd, y dir=both, y explicit, error bar style={line width=0.9pt}]
  table[x expr=\thisrow{t}+0.018, y=rel, y error plus expr=\thisrow{hi}-\thisrow{rel}, y error minus expr=\thisrow{rel}-\thisrow{lo}, restrict expr to domain={\thisrow{cohort}}{1:1}] {data_time.dat};
\addlegendentry{out-of-distribution}
\end{axis}
\end{tikzpicture}
\caption{The memory's gain emerges after contact. Relative change of the calibrated feature error with \method{} versus no memory ($\mu=0.35$, selected checkpoint; 95\pct{} intervals over seeds and objects). The gray band marks where 90\pct{} of pushes first touch the object.}
\label{fig:time}
\end{figure}

Physics can matter only once the pusher has touched the object (Fig.~\ref{fig:time}). At 0.2\,s, contact has just begun: 62\pct{} of the pushes have touched the object (median first contact 0.18\,s), and the memory changes the error by \ensuremath{-}0.2\pct{}, with an interval that includes zero. At 0.5\,s it lowers the error by 8.8\pct{}, and at 1\,s by 11.6--12.8\pct{}. The gain thus does not appear uniformly across horizons; it grows as the hidden physics shapes the motion after contact, although this pattern alone does not exclude other contributions at the later horizons. Absolute errors are not comparable across horizons: the 0.2-s targets vary more (a constant predictor scores 0.71, 0.43 and 0.58 at 0.2, 0.5 and 1\,s on validation objects), so the 0.2-s row of Table~\ref{tab:confirm} has a higher NMSE although it is relatively easier.

\subsection{Data scale}

\begin{table}[t]
\caption{Learning curve of the baseline predictor (selected checkpoint; validation set; mean over seeds). The 1-s error is calibrated, on $\mu=0.35$; a constant predictor scores 0.58.}
\label{tab:scale}
\centering
\footnotesize
\begin{tabular}{@{}lcccc@{}}
\toprule
Training objects & 64 & 256 & 1,024 & 2,048\\
\midrule
Validation NMSE (all horizons) & 0.644 & 0.508 & 0.413 & 0.385\\
Train/validation error ratio & 0.30 & 0.44 & 0.68 & 0.82\\
1-s NMSE, $\mu=0.35$ & 0.516 & 0.461 & 0.436 & 0.411\\
\bottomrule
\end{tabular}
\end{table}

With 64 training objects the predictor memorizes them (Table~\ref{tab:scale}, train/validation ratio 0.30). In an earlier 64-object pilot, which also differed in that both arms saw the raw calibration clips, even true physical parameters fed through the motion pathway changed the 1-s error by only +0.001 [\ensuremath{-}0.001, +0.003], although they cut the position error by about 21\pct{}. With 2,048 objects the predictor generalizes (ratio 0.82), and the same oracle test gives \ensuremath{-}0.021 [\ensuremath{-}0.026, \ensuremath{-}0.016]. Because the pilot changed two things at once, we read this as suggestive only: scaling the cheap simulated data was at least part of what let physical knowledge reach the features.

\section{Discussion and Limitations}
\label{sec:limits}

\textbf{What the result does and does not show.} Interaction experience, delivered through a capable pose pathway, makes a predictor in frozen V-JEPA~2 feature space forecast pushes measurably better. The improvement is confirmed on fresh objects and seeds, transfers to out-of-range physics (least clearly for low inertia) and emerges after contact. Our primary comparison isolates the experience: same architecture, memory set to zero. The comparison with the plain predictor measures the whole extension, so it should not be credited to the memory alone. The same-information control shows that the Bayesian memory is one good encoding of the experience, not a necessary one: our evidence supports trying plus a capable pathway, not the superiority of the Bayesian summary over the raw measurements.

\textbf{Scope.} The task is a simulated planar push with a known object shape, and the renderer shows only geometry, so the targets are close to pose-determined; a simulator with known geometry would forecast such scenes well. Latent world models matter where no such simulator exists, and the ladder recipe applies there as well. Whether the gain carries over to real pushing videos of objects with varied mass distributions~\cite{bauza2019omnipush} and to 3D scenes whose hidden properties must be inferred from observed interactions~\cite{tung2023physionpp} remains open. The memory uses the simulator's contact-model class and a friction prior matched to our design. The predictor is a compact model trained on frozen V-JEPA~2 features, not the released V-JEPA~2-AC model, and it still improves with more data (Table~\ref{tab:scale}). We did not test planning with the improved model. Arms that receive physical information through the Fourier pathway fit their training objects more closely: a train/validation error ratio below 0.7 occurs for 4 of 8 memory seeds, 7 of 8 raw-measurement seeds and 5 of 8 oracle seeds, against 0 of 8 for the no-memory arm. The gains hold on unseen objects regardless.

\textbf{A practical recipe.} The information ladder gives a cheap diagnostic before building any memory. Feed the true future pose and the true parameters through the intended pathway. If they help little, the pathway or its training is the likely bottleneck, and a better memory is unlikely to pay off; such trained oracle arms are diagnostic references, not guaranteed upper bounds.

\section{Conclusion}

Pushing before predicting pays off for a latent world model, provided that the physical knowledge can reach its features. With a three-push contact memory, \method{} improved a V-JEPA~2-feature predictor's 1-s forecasts by 11.6--12.8\pct{} on fresh objects in a pre-specified held-out test. A simple information ladder showed why a standard pose-token pathway gains little: the pathway, not the memory, limited the gain, and a random-Fourier pose head roughly tripled the gain of a perfect memory on fresh objects. A same-information control found a similar gain from the raw trial measurements, so the Bayesian memory, which condenses the trials into eight interpretable numbers, is a convenient encoding rather than a requirement. Learned experience encoders, real robot data and planning with the improved predictor are natural next steps.

\section*{Acknowledgment}
We disclose that experiment code and analysis scripts were written with the help of an AI coding agent (OpenAI Codex), and that the manuscript text, figures and tables were drafted with the help of an AI assistant (Anthropic Claude). The author designed the study, directed the experiments, verified all reported numbers against the experiment records, and takes full responsibility for the content.

\bibliographystyle{IEEEtran}
\bibliography{IEEEabrv,refs}

\begin{thebibliography}{10}
\providecommand{\url}[1]{#1}
\csname url@samestyle\endcsname
\providecommand{\newblock}{\relax}
\providecommand{\bibinfo}[2]{#2}
\providecommand{\BIBentrySTDinterwordspacing}{\spaceskip=0pt\relax}
\providecommand{\BIBentryALTinterwordstretchfactor}{4}
\providecommand{\BIBentryALTinterwordspacing}{\spaceskip=\fontdimen2\font plus
\BIBentryALTinterwordstretchfactor\fontdimen3\font minus
  \fontdimen4\font\relax}
\providecommand{\BIBforeignlanguage}[2]{{%
\expandafter\ifx\csname l@#1\endcsname\relax
\typeout{** WARNING: IEEEtran.bst: No hyphenation pattern has been}%
\typeout{** loaded for the language `#1'. Using the pattern for}%
\typeout{** the default language instead.}%
\else
\language=\csname l@#1\endcsname
\fi
#2}}
\providecommand{\BIBdecl}{\relax}
\BIBdecl

\bibitem{ha2018recurrent}
D.~Ha and J.~Schmidhuber, ``Recurrent world models facilitate policy
  evolution,'' in \emph{Adv. Neural Inf. Process. Syst. (NeurIPS)}, vol.~31,
  2018, pp. 2450--2462.

\bibitem{hafner2025dreamerv3}
D.~Hafner, J.~Pasukonis, J.~Ba, and T.~Lillicrap, ``Mastering diverse control
  tasks through world models,'' \emph{Nature}, vol. 640, no. 8059, pp.
  647--653, 2025.

\bibitem{lecun2022path}
\BIBentryALTinterwordspacing
Y.~LeCun, ``A path towards autonomous machine intelligence,'' OpenReview
  preprint, version 0.9.2, Jun. 2022. [Online]. Available:
  \url{https://openreview.net/forum?id=BZ5a1r-kVsf}
\BIBentrySTDinterwordspacing

\bibitem{assran2023ijepa}
M.~Assran \emph{et~al.}, ``Self-supervised learning from images with a
  joint-embedding predictive architecture,'' in \emph{Proc. IEEE/CVF Conf.
  Comput. Vis. Pattern Recognit. (CVPR)}, 2023, pp. 15\,619--15\,629.

\bibitem{bardes2024vjepa}
\BIBentryALTinterwordspacing
A.~Bardes \emph{et~al.}, ``Revisiting feature prediction for learning visual
  representations from video,'' \emph{Trans. Mach. Learn. Res.}, 2024.
  [Online]. Available: \url{https://openreview.net/forum?id=QaCCuDfBk2}
\BIBentrySTDinterwordspacing

\bibitem{assran2025vjepa2}
M.~Assran \emph{et~al.}, ``{V-JEPA} 2: Self-supervised video models enable
  understanding, prediction and planning,'' 2025, arXiv:2506.09985.

\bibitem{garrido2025intuitive}
Q.~Garrido \emph{et~al.}, ``Intuitive physics understanding emerges from
  self-supervised pretraining on natural videos,'' 2025, arXiv:2502.11831.

\bibitem{allen2020rapid}
K.~R. Allen, K.~A. Smith, and J.~B. Tenenbaum, ``Rapid trial-and-error learning
  with simulation supports flexible tool use and physical reasoning,''
  \emph{Proc. Natl. Acad. Sci. USA}, vol. 117, no.~47, pp. 29\,302--29\,310,
  2020.

\bibitem{bohg2017interactive}
J.~Bohg \emph{et~al.}, ``Interactive perception: Leveraging action in
  perception and perception in action,'' \emph{IEEE Trans. Robot.}, vol.~33,
  no.~6, pp. 1273--1291, 2017.

\bibitem{zhou2019epi}
W.~Zhou, L.~Pinto, and A.~Gupta, ``Environment probing interaction policies,''
  in \emph{Proc. Int. Conf. Learn. Represent. (ICLR)}, 2019.

\bibitem{xu2019densephysnet}
Z.~Xu, J.~Wu, A.~Zeng, J.~B. Tenenbaum, and S.~Song, ``{DensePhysNet}: Learning
  dense physical object representations via multi-step dynamic interactions,''
  in \emph{Proc. Robot.: Sci. Syst. (RSS)}, 2019.

\bibitem{wu2015galileo}
J.~Wu, I.~Yildirim, J.~J. Lim, W.~T. Freeman, and J.~B. Tenenbaum, ``{Galileo}:
  Perceiving physical object properties by integrating a physics engine with
  deep learning,'' in \emph{Adv. Neural Inf. Process. Syst. (NeurIPS)},
  vol.~28, 2015, pp. 127--135.

\bibitem{li2025pinwm}
W.~Li \emph{et~al.}, ``{PIN-WM}: Learning physics-{IN}formed world models for
  non-prehensile manipulation,'' in \emph{Proc. Robot.: Sci. Syst. (RSS)},
  2025.

\bibitem{yu2017uposi}
W.~Yu, J.~Tan, C.~K. Liu, and G.~Turk, ``Preparing for the unknown: Learning a
  universal policy with online system identification,'' in \emph{Proc. Robot.:
  Sci. Syst. (RSS)}, 2017.

\bibitem{kumar2021rma}
A.~Kumar, Z.~Fu, D.~Pathak, and J.~Malik, ``{RMA}: Rapid motor adaptation for
  legged robots,'' in \emph{Proc. Robot.: Sci. Syst. (RSS)}, 2021.

\bibitem{lin2020evf}
Y.-C. Lin, M.~Bauza, and P.~Isola, ``Experience-embedded visual foresight,'' in
  \emph{Proc. 3rd Conf. Robot Learn. (CoRL)}, ser. PMLR, vol. 100, 2020, pp.
  1015--1024.

\bibitem{tan2026latent}
K.~Tan \emph{et~al.}, ``What can latent world models know? {P}hysical
  information in multimodal predictive representations,'' 2026,
  arXiv:2607.27017.

\bibitem{liu2026sgjepa}
A.~Z. Liu, H.~Sun, L.~Baker, R.~Balestriero, and J.~Sous, ``{Semigroup-JEPA}:
  Latent dynamics consistency for zero-shot physics generalization,'' 2026,
  arXiv:2609.10464.

\bibitem{wang2026adajepa}
Y.~Wang, O.~Bounou, Y.~LeCun, and M.~Ren, ``{AdaJEPA}: An adaptive latent world
  model,'' 2026, arXiv:2606.32026.

\bibitem{zhou2025dinowm}
G.~Zhou, H.~Pan, Y.~LeCun, and L.~Pinto, ``{DINO-WM}: World models on
  pre-trained visual features enable zero-shot planning,'' in \emph{Proc. 42nd
  Int. Conf. Mach. Learn. (ICML)}, ser. PMLR, vol. 267, 2025, pp.
  79\,115--79\,135.

\bibitem{prasanna2024crssm}
S.~Prasanna, K.~Farid, R.~Rajan, and A.~Biedenkapp, ``Dreaming of many worlds:
  Learning contextual world models aids zero-shot generalization,''
  \emph{Reinforcement Learn. J.}, vol.~3, pp. 1317--1350, 2024.

\bibitem{wu2016physics101}
J.~Wu, J.~J. Lim, H.~Zhang, J.~B. Tenenbaum, and W.~T. Freeman, ``{Physics
  101}: Learning physical object properties from unlabeled videos,'' in
  \emph{Proc. Brit. Mach. Vis. Conf. (BMVC)}, 2016, pp. 39.1--39.12.

\bibitem{battaglia2016interaction}
P.~Battaglia, R.~Pascanu, M.~Lai, D.~Jimenez~Rezende, and K.~Kavukcuoglu,
  ``Interaction networks for learning about objects, relations and physics,''
  in \emph{Adv. Neural Inf. Process. Syst. (NeurIPS)}, vol.~29, 2016, pp.
  4502--4510.

\bibitem{bear2021physion}
D.~M. Bear \emph{et~al.}, ``{Physion}: Evaluating physical prediction from
  vision in humans and machines,'' in \emph{Proc. Neural Inf. Process. Syst.
  Track Datasets Benchmarks}, vol.~1, 2021.

\bibitem{yi2020clevrer}
K.~Yi \emph{et~al.}, ``{CLEVRER}: {CoLlision} events for video {REpresentation}
  and reasoning,'' in \emph{Proc. Int. Conf. Learn. Represent. (ICLR)}, 2020.

\bibitem{chen2022comphy}
Z.~Chen \emph{et~al.}, ``{ComPhy}: Compositional physical reasoning of objects
  and events from videos,'' in \emph{Proc. Int. Conf. Learn. Represent.
  (ICLR)}, 2022.

\bibitem{tung2023physionpp}
H.-Y. Tung \emph{et~al.}, ``{Physion++}: Evaluating physical scene
  understanding that requires online inference of different physical
  properties,'' in \emph{Adv. Neural Inf. Process. Syst. (NeurIPS), Datasets
  Benchmarks Track}, vol.~36, 2023, pp. 67\,048--67\,068.

\bibitem{zhan2026idpp}
G.~Zhan, X.~Ma, W.~Xie, and A.~Zisserman, ``Inferring dynamic physical
  properties from video foundation models,'' in \emph{Proc. IEEE/CVF Conf.
  Comput. Vis. Pattern Recognit. Workshops (CVPRW)}, 2026, pp. 5214--5224.

\bibitem{mason1986pushing}
M.~T. Mason, ``Mechanics and planning of manipulator pushing operations,''
  \emph{Int. J. Robot. Res.}, vol.~5, no.~3, pp. 53--71, 1986.

\bibitem{yu2016pushing}
K.-T. Yu, M.~Bauza, N.~Fazeli, and A.~Rodriguez, ``More than a million ways to
  be pushed. {A} high-fidelity experimental dataset of planar pushing,'' in
  \emph{Proc. IEEE/RSJ Int. Conf. Intell. Robots Syst. (IROS)}, 2016, pp.
  30--37.

\bibitem{bauza2017probabilistic}
M.~Bauza and A.~Rodriguez, ``A probabilistic data-driven model for planar
  pushing,'' in \emph{Proc. IEEE Int. Conf. Robot. Autom. (ICRA)}, 2017, pp.
  3008--3015.

\bibitem{ajay2018augmenting}
A.~Ajay \emph{et~al.}, ``Augmenting physical simulators with stochastic neural
  networks: Case study of planar pushing and bouncing,'' in \emph{Proc.
  IEEE/RSJ Int. Conf. Intell. Robots Syst. (IROS)}, 2018, pp. 3066--3073.

\bibitem{florence2022ibc}
P.~Florence \emph{et~al.}, ``Implicit behavioral cloning,'' in \emph{Proc. 5th
  Conf. Robot Learn. (CoRL)}, ser. PMLR, vol. 164, 2022, pp. 158--168.

\bibitem{chi2023diffusionpolicy}
C.~Chi \emph{et~al.}, ``Diffusion policy: Visuomotor policy learning via action
  diffusion,'' in \emph{Proc. Robot.: Sci. Syst. (RSS)}, 2023.

\bibitem{doshivelez2016hipmdp}
F.~Doshi-Velez and G.~Konidaris, ``Hidden parameter {Markov} decision
  processes: A semiparametric regression approach for discovering latent task
  parametrizations,'' in \emph{Proc. Int. Joint Conf. Artif. Intell. (IJCAI)},
  2016, pp. 1432--1440.

\bibitem{nagabandi2019adapt}
A.~Nagabandi \emph{et~al.}, ``Learning to adapt in dynamic, real-world
  environments through meta-reinforcement learning,'' in \emph{Proc. Int. Conf.
  Learn. Represent. (ICLR)}, 2019.

\bibitem{levy2026simdist}
J.~Levy \emph{et~al.}, ``Simulation distillation: Pretraining world models in
  simulation for rapid real-world adaptation,'' 2026, arXiv:2603.15759.

\bibitem{garnelo2018cnp}
M.~Garnelo \emph{et~al.}, ``Conditional neural processes,'' in \emph{Proc. 35th
  Int. Conf. Mach. Learn. (ICML)}, ser. PMLR, vol.~80, 2018, pp. 1704--1713.

\bibitem{zaheer2017deepsets}
M.~Zaheer, S.~Kottur, S.~Ravanbakhsh, B.~Poczos, R.~R. Salakhutdinov, and A.~J.
  Smola, ``Deep sets,'' in \emph{Adv. Neural Inf. Process. Syst. (NeurIPS)},
  vol.~30, 2017, pp. 3391--3401.

\bibitem{bauza2019omnipush}
M.~Bauza \emph{et~al.}, ``{Omnipush}: Accurate, diverse, real-world dataset of
  pushing dynamics with {RGB-D} video,'' in \emph{Proc. IEEE/RSJ Int. Conf.
  Intell. Robots Syst. (IROS)}, 2019, pp. 4265--4272.

\bibitem{rahimi2007rff}
A.~Rahimi and B.~Recht, ``Random features for large-scale kernel machines,'' in
  \emph{Adv. Neural Inf. Process. Syst. (NeurIPS)}, vol.~20, 2007, pp.
  1177--1184.

\bibitem{tancik2020fourier}
M.~Tancik \emph{et~al.}, ``{Fourier} features let networks learn high frequency
  functions in low dimensional domains,'' in \emph{Adv. Neural Inf. Process.
  Syst. (NeurIPS)}, vol.~33, 2020, pp. 7537--7547.

\bibitem{loshchilov2019adamw}
I.~Loshchilov and F.~Hutter, ``Decoupled weight decay regularization,'' in
  \emph{Proc. Int. Conf. Learn. Represent. (ICLR)}, 2019.

\bibitem{satterthwaite1946}
F.~E. Satterthwaite, ``An approximate distribution of estimates of variance
  components,'' \emph{Biometrics Bull.}, vol.~2, no.~6, pp. 110--114, 1946.

\end{thebibliography}

\end{document}